\DeclareMathOperator*{\minimize}{minimize}
\newtheorem{theorem}{Theorem}[section]
\newtheorem{lemma}[theorem]{Lemma}
\newcommand\blfootnote[1]{%
  \begingroup
  \renewcommand\thefootnote{}\footnote{#1}%
  \addtocounter{footnote}{-1}%
  \endgroup
}
\title{An Effective Flow-based Method for Positive-Unlabeled Learning: 2-HNC} 
\author{
  Dorit Hochbaum, Torpong Nitayanont \\
  Department of Industrial Engineering and Operations Research \\
  University of California, Berkeley \\
  Berkeley, CA\\
  \texttt{\{dhochbaum, torpong\_nitayanont\}@berkeley.edu} \\
}
\begin{document}
\maketitle

\begin{abstract}
In many scenarios of binary classification, only positive instances are provided in the training data, leaving the rest of the data unlabeled. 
This setup, known as positive-unlabeled (PU) learning, is addressed here with a network flow-based method which utilizes pairwise similarities between samples.
The method we propose here, 2-HNC, leverages Hochbaum’s Normalized Cut (HNC) and the set of solutions it provides by solving a parametric minimum cut problem.
The set of solutions, that are nested partitions of the  samples into two sets, correspond to varying tradeoff values between the two goals: high intra-similarity inside the sets and low inter-similarity between the two sets. 
This nested sequence is utilized here to deliver a ranking of unlabeled samples by their likelihood of being negative. 
Building on this insight, our method, 2-HNC, proceeds in two stages. 
The first stage generates this ranking without assuming any negative labels, using a problem formulation that is constrained only on positive labeled samples. 
The second stage augments the positive set with likely-negative samples and recomputes the classification. 
The final label prediction selects among all generated partitions
in both stages, the one that delivers a positive class proportion, closest to a prior estimate of this quantity, which is assumed to be given. 
Extensive experiments across synthetic and real datasets show that 2-HNC yields strong performance and often surpasses existing state-of-the-art algorithms.
\blfootnote{This work is an extended version of  \citep{nitayanontpositive}.}
\end{abstract}
\keywords{Positive-Unlabeled Learning \and Binary Classification \and Pairwise Similarity \and Parametric Cut}

\section{Introduction}
\label{sec:introduction}

When collecting data, there are scenarios where negative samples are difficult to verify or obtain, or when the absence of a positive label for a particular sample does not always imply that the sample is negative. 
This happens, for example, in the identification of malignant genes \citep{yang2012positive,yang2014ensemble} where a limited set of genes have been verified to cause diseases and labeled as \textit{positive} while many other genes have not been evaluated and remain \textit{unlabeled}. 
In personalized advertising \citep{yi2017scalable,bekker2020learning}, each advertisement that is clicked is recorded as a positive sample. However, an un-clicked advertisement is regarded as unlabeled as it could either be uninteresting to the viewer (\textit{negative}), or interesting but overlooked (\textit{positive}). Other domains with similar characteristics include fake reviews detection \citep{li2014spotting,ren2014positive} and remote sensing \citep{li2010positive}.
In such cases, the binary classification problem is formulated as a Positive-Unlabeled (PU) learning problem, where the given data consists of two sets: (i) the labeled set that contains only positive samples and (ii) the unlabeled set that contains samples from both classes without given labels. PU learning is a variant of binary classification that has been studied in various domains such as those that are mentioned above.

PU learning is related to a class of classification problems called one-class learning problem. PU learning and one-class learning are similar in the way that labeled samples that are provided in the problem instance only come from one class. However, unlabeled samples may or may not be given in one-class learning \citep{khan2014one}, whereas techniques in PU learning always utilize unlabeled samples. 
Another related class of problem is semi-supervised learning, where unlabeled samples are used in addition to the labeled set of samples from both classes, giving better performances than one-class learning methods \citep{lee2003learning,li2010positive}. PU learning is a special case of semi-supervised learning where no negative labeled samples are provided.

One way for dealing with the absence of negative labeled samples is to identify unlabeled samples that are likely negative. These likely-negative unlabeled samples are then 
added to the labeled set. A traditional classifier of choice is then trained using the augmented labeled set \citep{liu2002partially,li2003learning}. 
A related approach is to treat all unlabeled samples as noisy negative samples prior to applying a noise-robust classifier such as the noise-robust support vector machine method in \citep{liu2003building,claesen2015robust}.
Another common approach is to train a classifier on a modified risk estimator, in which each unlabeled sample can be regarded as positive and negative with different weights. This idea has been adopted in different learning methods such as neural network models \citep{du2014analysis,du2015convex,kiryo2017positive}, and random forest \citep{wilton2022positive} with a modified impurity function. A common prior information that many PU learning methods, including the works mentioned here, rely on is the fraction of positive samples in the data, which we denote as $\pi$.

The new method proposed here for PU learning is based on a network flow-based method called Hochbaum's Normalized Cut (HNC) \citep{hochbaum2009polynomial}. 
HNC is a clustering method that finds an optimal cluster of samples to achieve high intra-similarity within the cluster and low inter-similarity between the cluster and its complement, with a {\em tradeoff} parameter that balances the two goals. The problem was shown in \citep{hochbaum2009polynomial} to be solved, for all values of the tradeoff parameter, as a minimum cut problem on an associated graph. This method has been used in binary classification where both positive and negative labeled samples are available, e.g.\ \citep{yang2014supervised,baumann2019comparative}. Data samples are partitioned into a cluster and its complement set; the predicted label for an unlabeled sample is determined based on which set in the partition it belongs to. HNC is applicable in PU learning since it does not require labeled samples from both classes. Moreover, it makes use of the available unlabeled samples through their similarities with labeled samples and among themselves, making it advantageous when labeled data is limited.

As a transductive method, HNC partitions the given positive labeled and unlabeled samples, and predicts labels only for the given unlabeled samples.
This is different from inductive methods that make predictions for any unlabeled samples, whether they are the given unlabeled samples, or previously unseen
unlabeled samples. 
HNC can be extended to be used as an inductive classifier by assigning  an unseen unlabeled sample either the label of the subset in the partition to which it is closer, or the label that results in a smallest change in the objective value.

Our new method for PU learning, called \textit{2-HNC}, utilizes the unique features of HNC in two stages.
In the first stage, 2-HNC generates multiple partitions of data samples for different tradeoff values, efficiently, with a parametric cut procedure.
Based on the generated sequence of partitions,
we generate a ranking of the unlabeled samples in how likely they are to be negative. Using this ranking,
stage 2 determines a set of unlabeled samples that are likely negative and applies HNC using both the positive samples and the likely-negative unlabeled samples. Among all partitions generated in both stages, the one whose fraction of positive samples is closest to the provided prior $\pi$ is selected. The prediction for the unlabeled samples is based on this selected partition. 

We provide here an experimental study, using both synthetic and real data, comparing the performance of 2-HNC with that of leading methods of PU learning.
These include two commonly used benchmarks in PU learning works, \textit{uPU} \citep{du2014analysis,du2015convex} and \textit{nnPU} \citep{kiryo2017positive}, as well as a recent state-of-the-art tree-based method, \textit{PU ET} \citep{wilton2022positive}.  Our experiments  demonstrate that 2-HNC outperforms these methods for PU learning tasks.  

The major contribution here is the introduction of the 2-HNC method which is particularly suitable in situations where labeled samples are provided only for one class, such as PU learning.
The second contribution is the method of ranking the likelihood of samples to be of certain label based on the order in which it joins one set in the sequence of partitions. 
This has the potential of applicability in setups beyond PU learning, such as in active learning, where the task is to select the most informative samples for labeling. Our ranking method could be used to infer the informativeness of an unlabeled sample based on how early or late it joins the set in the partitions. Another potential use is in outlier detection. The order in which unlabeled samples move from one set to the other set in the partition helps determine the degree to which samples are dissimilar from the set of known regular samples or the majority set. 
The third contribution of this work is proving that clustering in which the goal is to combine the maximization of the intra-similarities in {\em both} the cluster and its complement with the minimization of the inter-similarity between the two sets, is equivalent to a form of HNC.  We call that, the {\em Double Intra-similarity Theorem}, Theorem \ref{thm:generalized}.
The implication of this theorem is that including both intra-similarities in the objective function is a clustering problem that is solvable as a minimum cut on an associated graph.

\section{Related Research}\label{sec:related-works}

The main challenge of PU learning is the lack of negative labeled samples. A number of methods utilize a preprocessing step to identify a set of unlabeled samples that are likely to be negative and include them as negative labeled samples prior to training a traditional binary classifier. 
One such method is the Spy technique \citep{liu2002partially} that, first, selects a few positive labeled samples as \textit{spies}. These positive spies, together with all unlabeled samples, are then used as negative labeled samples to train a binary classifier, alongside the given positive labeled set. With a binary classifier trained on this data, unlabeled samples with lower posterior probability 
than the spies are considered likely to be negative. 
The Rocchio method of \citet{li2003learning} marks unlabeled samples that are closer to the centroid of unlabeled samples than that of positive labeled samples as likely negative. The work of \citet{lu2010semi} used Rocchio to expand the positive labeled set when a small positive labeled set is given 

Instead of selecting some unlabeled samples as negative samples, methods such as \citep{liu2003building,claesen2015robust} regard all unlabeled samples as noisy negative samples. By doing so, the noise-robust support vector machine in \citep{liu2003building} penalizes misclassified positive samples more than misclassified negative samples, which were originally unlabeled, as their pseudo-negative labels are less reliable than the known labels of the given positive samples.
The method in \citep{claesen2015robust} is a bagging method in which multiple SVM models of \citep{liu2003building} are trained on bootstrap resamples 
of the data. 

Another common approach in recent works is to train a model based on an empirical risk estimator, modified in the context of PU learning. In the \textit{uPU} method of \citet{du2014analysis,du2015convex}, the authors proposed an unbiased risk estimator for PU data to use as a loss function in a neural network model of choice. The loss due to each unlabeled sample is a weighted combination of the loss assuming that the sample is positive and the loss assuming that the sample is negative. These weights are dependent on the prior $\pi$, which is the fraction of positive samples in the data. 
\citet{kiryo2017positive} mitigates the overfitting nature of uPU via a non-negative risk estimator in their state-of-the-art method known as \textit{nnPU}. Research on other classifiers, besides deep learning models, that apply a similar idea include a random forest model called \textit{PU ET}, by \citet{wilton2022positive}, in which the impurity function is modified for PU data by considering each unlabeled sample as a weighted combination of positive and negative samples.  PU ET gives competitive results, especially on tabular data type where deep learning PU methods are not always effective.

There are methods, other than the ones listed above, which rely on pairwise similarities between samples. In the label propagation method of \citep{carnevali2021graph}, a graph representation of the data is constructed with edge weights that reflect pairwise similarities. The likelihood of being negative for each unlabeled sample is inferred based on its shortest path distance on the graph to the positive labeled set. Labels are then propagated from the positive and likely-negative unlabeled samples to the remaining unlabeled ones. \citet{zhang2019positive} presented a maximum margin-based method that penalizes similar samples that are classified differently.
While methods such as those presented in \citep{carnevali2021graph,zhang2019positive} employed graph representation of the data as well as pairwise similarities, a network-flow based approach, as the one presented here, has never been utilized in PU learning.

Hochbaum's Normalized Cut or HNC \citep{hochbaum2009polynomial} has been used in binary classification, where labeled samples from both classes are given, and exhibited competitive performance \citep{baumann2019comparative,spaen2019hnccorr,yang2014supervised}. Here, we devise a variant of HNC for PU learning, called \textit{2-HNC}, which is compared to the following benchmarks: \textit{uPU} \citep{du2014analysis}, \textit{nnPU} \citep{kiryo2017positive} and \textit{PU ET} \citep{wilton2022positive}. uPU and nnPU are selected because they are broadly considered as standard PU learning benchmarks. Between the two, nnPU exhibited competitive performance more consistently \citep{kiryo2017positive}. 
PU ET, a recent state-of-the-art method, exhibited leading performance, particularly on tabular data where it outperformed deep learning models \citep{wilton2022positive}. Similar to most PU methods, 2-HNC as well as the benchmark methods in the experiments presented here - uPU, nnPU and PU ET - rely on the given prior $\pi$, which is the fraction of positive samples in the data.

\section{Preliminaries and Notations} \label{sec:prelim}

\subsection{Problem Statement}

We are given a dataset $V$ that consists of a set of positive labeled samples $L^+$ and a set of unlabeled samples $U$. 
Each sample $i \in L^+$ has an assigned label $y_i = +1$, indicating the positive class. 
Each unlabeled sample $j \in U$ belongs to either the positive or the negative class. Their labels, which are considered the {\em ground truth}, are unavailable to the prediction method. 
Each sample $i \in L \cup U$, is associated with a feature vector representation $\bm{x}_i \in \mathbb{R}^H$ where $H$ the number of features for each sample. 
 The goal is to predict the label of the unlabeled samples in $U$.

\subsection{Graph Notations} \label{subsec:notation}
The PU learning task is formalized here as a graph problem. To that end, we introduce relevant concepts and notations.

For an undirected graph $G=(V,E)$, where $V$ is the set of nodes (or vertices) and each edge $[i,j] \in E$ has a weight of $w_{ij}$, we define the function $C(V_1,V_2)$ for two sets $V_1,V_2 \subseteq V$:  $C(V_1,V_2) := \sum_{[i,j] \in E, i \in V_1, j \in V_2} w_{ij}$. That is, this is the sum of weights of edges that have one endpoint in $V_1$ and the other in $V_2$. 

For a directed graph $G=(V,A)$, where each arc $(i,j) \in A$ has a weight of $w_{ij}$, $C(V_1,V_2)$ is defined as $\sum_{(i,j) \in A, i \in V_1, j \in V_2} w_{ij}$, which is the sum of weights of the arcs that go from nodes in $V_1$ to those in $V_2$. 
We denote the \textit{weighted degree} of node $i \in V$ by $d_i$ where $d_i = \sum_{[i,j] \in E, j \in V} w_{ij}$. The \textit{volume} of the set $S \subseteq V$ is denoted by $d(S)=\sum_{i \in S} d_i$.

The \textit{minimum $(s,t)$-cut problem} is defined on a directed graph $G=(V,A)$ in which two nodes in $V$ are specified to be the source node, $s$, and the sink node, $t$. We call the directed graph with two designated nodes $s$ and $t$ an $(s,t)$-graph. 
An $(s,t)$-cut is a partition of the set of nodes $V$ into $(S,T)$ such that $s \in S, t \in T, S \subseteq V$ and $T = V \backslash S$. The set $S$ in the partition is referred to as the \textit{source set} whereas the set $T$ is the \textit{sink set}.
The \textit{capacity} of the $(s,t)$-cut $(S,T)$ is $C(S,T)$. 
A \textit{minimum $(s,t)$-cut} is the $(s,t)$-cut that has the minimum capacity among all cuts. For the remainder of this paper, we will refer to the minimum $(s,t)$-cut simply as \textit{the minimum cut}.

In the context of the minimum cut problem, as well as the related \textit{maximum flow problem}, the weights of the arcs in an $(s,t)$-graph are called \textit{capacities}. This will also be the term that we use in this work to refer to the weights of arcs of an $(s,t)$-graph.

\subsection{Nested Cut Property and Parametric Cut} \label{subsec:nestedcut}
Consider an $(s,t)$-graph $G_{st} = (V,A)$ whose capacities of the source and sink adjacent arcs, $A_s := \{(s,i) \in A \;|\; i \in V\}$ and $A_t := \{(i,t) \in A \;|\; i \in V\}$, are functions of a parameter $\lambda$.
$G_{st}(\lambda)$ is said to be a \textit{parametric flow graph} if the capacities of arcs in $A_s$ and $A_t$ are non-decreasing and non-increasing (or non-increasing and non-decreasing) functions of $\lambda$, respectively.

For a list of increasing values of $\lambda$: $\lambda _1 < \lambda _2\ldots < \lambda _q$, let the corresponding minimum cut partitions be $(S_1, \overline{S}_1),$ $(S_2, \overline{S}_2), \dots,$ $(S_q, \overline{S}_q)$. 
These partitions satisfy the nested cut property:

\begin{lemma}[Nested Cut Property] \label{lemma:nestedcut}\citep{gallo1989fast,hochbaum1998pseudoflow,hochbaum2008pseudoflow} 
Given a parametric flow graph $G_{st}(\lambda)$ with source and sink adjacent arc capacities that are non-increasing and non-decreasing functions of a parameter $\lambda$, respectively, and a sequence of parameter values $\lambda _1 < \lambda _2\ldots < \lambda _q$.
Then, for the corresponding minimum cut partitions, $(S_1, \overline{S}_1),$ $(S_2, \overline{S}_2), \dots,$ $(S_q, \overline{S}_q)$, the sink sets are nested:
$\overline{S}_1 \subseteq \overline{S}_2 \subseteq \dots \subseteq \overline{S}_q$. On the other hand, if the source and sink adjacent arcs of the parametric flow graph have non-decreasing and non-increasing capacities, the source sets are nested instead: $S_1 \subseteq S_2 \subseteq \dots \subseteq S_q$.
\end{lemma}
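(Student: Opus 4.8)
The plan is to prove the statement for a single pair of parameter values and then invoke transitivity. It suffices to show that for any $\lambda_1 < \lambda_2$ with minimum-cut source sets $S_1$ and $S_2$, the source sets nest as $S_1 \supseteq S_2$ (equivalently $\overline{S}_1 \subseteq \overline{S}_2$); applying this to every pair in the sequence yields the full chain $\overline{S}_1 \subseteq \cdots \subseteq \overline{S}_q$. The second case of the lemma (source-adjacent capacities non-decreasing, sink-adjacent non-increasing) follows by the identical argument with the roles of source and sink interchanged, so I would only write out the first case in full.

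The engine of the proof is submodularity of the cut capacity combined with the monotone dependence of the parametric arcs on $\lambda$. For a fixed $\lambda$, I write the capacity of the cut induced by a source set $S$ (with $s \in S$, $t \notin S$) as $\mathrm{Cut}_\lambda(S) = Q(S) + L_\lambda(S)$, where $Q(S)$ collects the internal arcs — those not incident to $s$ or $t$ — and is therefore independent of $\lambda$, while $L_\lambda(S) = \sum_{i \in \overline{S}} c_s(i,\lambda) + \sum_{i \in S} c_t(i,\lambda)$ collects the source- and sink-adjacent arcs. Two structural facts drive everything: $Q$ is \emph{submodular} in $S$ (it is the cut function of the internal subgraph), so $Q(S_1) + Q(S_2) \ge Q(S_1 \cup S_2) + Q(S_1 \cap S_2)$; and $L_\lambda$ is \emph{modular}, so $L_\lambda(S_1) + L_\lambda(S_2) = L_\lambda(S_1 \cup S_2) + L_\lambda(S_1 \cap S_2)$ for every $\lambda$.

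The key step is an uncrossing argument. I would compare $S_1$ against $S_1 \cup S_2$ using optimality at $\lambda_1$, and $S_2$ against $S_1 \cap S_2$ using optimality at $\lambda_2$; adding these two inequalities and cancelling the $Q$-terms with submodularity leaves the purely modular inequality $L_{\lambda_1}(S_1) + L_{\lambda_2}(S_2) \le L_{\lambda_1}(S_1 \cup S_2) + L_{\lambda_2}(S_1 \cap S_2)$. I would then evaluate the right-minus-left difference node by node over the four classes $S_1 \cap S_2$, $S_1 \setminus S_2$, $S_2 \setminus S_1$, $\overline{S}_1 \cap \overline{S}_2$. Every class contributes zero except the nodes in $S_2 \setminus S_1$, each of which contributes $[c_t(i,\lambda_1) - c_t(i,\lambda_2)] + [c_s(i,\lambda_2) - c_s(i,\lambda_1)]$; since $c_t$ is non-decreasing and $c_s$ is non-increasing in $\lambda$ while $\lambda_1 < \lambda_2$, each bracket is $\le 0$. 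Thus the difference is simultaneously $\ge 0$ (from uncrossing) and $\le 0$ (from monotonicity), forcing equality throughout — in particular $S_1 \cup S_2$ is optimal at $\lambda_1$ and $S_1 \cap S_2$ is optimal at $\lambda_2$.

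To finish, I would handle the non-uniqueness of minimum cuts, which I expect to be the main obstacle: the nesting conclusion is about a canonical choice of cuts, not arbitrary ones. Submodularity shows the optimal source sets at each $\lambda$ form a lattice, so there is a well-defined \emph{minimal} minimizer; taking $S_1, S_2$ to be these minimal minimizers and using that $S_1 \cap S_2$ is optimal at $\lambda_2$ gives $S_1 \cap S_2 = S_2$, i.e. $S_2 \subseteq S_1$, which is exactly the required nesting. Equivalently, one may appeal to the fact that the parametric pseudoflow and GGT algorithms return precisely these nested minimal cuts by construction. Besides the lattice/minimal-minimizer subtlety, the only delicate computation is the sign bookkeeping in the node-by-node evaluation — in particular, choosing the \emph{correct} pairing ($S_1$ with $S_1 \cup S_2$ and $S_2$ with $S_1 \cap S_2$, rather than the reverse), since the other pairing yields a vacuous tautology instead of the sign that forces equality.
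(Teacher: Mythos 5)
The paper does not prove this lemma at all: it is quoted as a known result, with the proof deferred to the cited works of Gallo--Grigoriadis--Tarjan and Hochbaum's pseudoflow papers. So there is no in-paper argument to compare yours against, and your proposal has to be judged on its own merits --- and it holds up. The decomposition $\mathrm{Cut}_\lambda(S)=Q(S)+L_\lambda(S)$ with $Q$ submodular and $L_\lambda$ modular, the uncrossing step that pairs $S_1$ with $S_1\cup S_2$ under optimality at $\lambda_1$ and $S_2$ with $S_1\cap S_2$ under optimality at $\lambda_2$, and the node-by-node sign analysis (only nodes in $S_2\setminus S_1$ contribute, each a sum of two non-positive brackets) are all correct; the forced equality then makes $S_1\cup S_2$ optimal at $\lambda_1$ and $S_1\cap S_2$ optimal at $\lambda_2$, which is exactly what is needed. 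Your side remark about the pairing is also right: the reverse pairing produces a difference that is non-negative on both sides, hence a tautology. This is essentially the classical argument from the cited literature, so the route is standard rather than novel, but it is complete.

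One point where your treatment is actually more careful than the paper's statement: the non-uniqueness issue you flag is real, not just a technicality. As literally stated (arbitrary minimum cut partitions $(S_k,\overline{S}_k)$), the lemma can fail --- for instance, constant capacities are simultaneously non-increasing and non-decreasing in $\lambda$, and if two incomparable minimum cuts exist they may be selected at $\lambda_1$ and $\lambda_2$, breaking nestedness. The correct reading, which your lattice/minimal-minimizer argument supplies, is that nestedness holds for the canonical (minimal or maximal source-set) minimum cuts, which are precisely the cuts returned by the parametric push-relabel and pseudoflow procedures the paper relies on. So your proof both establishes the result and makes explicit a hypothesis the paper leaves implicit.
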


\begin{figure}[h!] 
\centering
  \includegraphics[width=.98\linewidth]{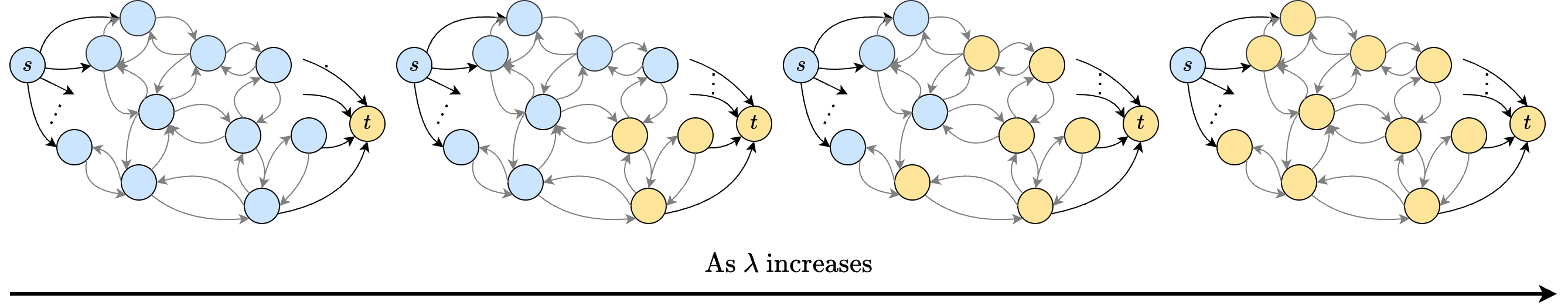}
  \caption{Nested cut partitions of a parametric flow graph in which the capacities of source and sink adjacent arcs are non-increasing and non-decreasing functions of the parameter $\lambda$. Sink sets (in yellow) of smaller $\lambda$ are nested in those of larger $\lambda$.} 
  \label{fig:nestedsequence}
\end{figure}


The nestedness of the partitions sequence is illustrated in Figure \ref{fig:nestedsequence} for a parametric flow graph that has non-increasing and non-decreasing source adjacent and sink adjacent arc capacities, respectively. For a sufficiently small value of $\lambda$, the sink set consists only of the sink node $t$ and nodes that are connected to $t$ with infinite capacities, if any. The remaining nodes, and the source node $s$, are in the source set (in blue). As the value of $\lambda$ increases, at certain values, the partition changes and some nodes from the source set are added to the sink set. Once a node moves to the sink set for a particular value of $\lambda$, it remains in the sink set for all larger $\lambda$. 

The \textit{parametric (minimum) cut} procedure finds the minimum cuts and respective partitions of the parametric flow graph $G_{st}(\lambda)$ for all possible $\lambda$ values. 
\citet{gallo1989fast} introduced the parametric cut procedure based on the push-relabel algorithm and \citet{hochbaum1998pseudoflow, hochbaum2008pseudoflow} introduced a parametric cut procedure based on HPF (Hochbaum's PseudoFlow). The \textit{fully parametric} version of both procedures identify all the breakpoints and their respective minimum cut partitions for all values of $\lambda$, whereas the \textit{simple parametric} version finds minimum cuts for a given list of increasing values of $\lambda$. The fully parametric and simple parametric versions of both procedures run in the complexity of a single minimum cut procedure.
While the theoretical complexities of both procedures by \citet{gallo1989fast} and \citet{hochbaum1998pseudoflow, hochbaum2008pseudoflow} are the same, the one of \citet{gallo1989fast} has not been implemented, whereas the fully parametric HPF is implemented and available at \citep{fullpara}, and the simple parametric HPF is available at \citep{simplepara}. 

Note that in practice, simple parametric pseudoflow runs faster than fully parametric pseudoflow. 
Therefore, in this work, when we solve a parametric cut problem, we solve it for a list of parameter values using the simple parametric version \citep{simplepara}.

\subsection{Hochbaum's Normalized Cut (HNC)} \label{subsec:HNC}
We represent a dataset as 
an undirected graph $G=(V,E)$, where each node in the set $V$ corresponds to a sample in the data. 
The notation $V$ refers henceforth to both the dataset and the set of nodes in the graph.
An edge $[i,j] \in E$ connects a pair of nodes (or samples), carrying the weight $w_{ij}$ that quantifies the \textit{pairwise similarity} between samples $i$ and $j$.

For a partition of the set of samples $V$ into two disjoint sets $S$ and $\overline{S}$, where $\overline{S} = V \backslash S$, $C(S, \overline{S}) = \sum_{[i,j]\in E, i\in S, j\in \overline{S}} w_{ij}$ represents the \textit{inter-similarity} between samples in $S$ and its complement $\overline{S}$.
$C(S, S) = \sum_{[i,j]\in E, i\in S, j\in S, i < j} w_{ij}$ denotes the total sum of similarities between samples within the set $S$, the \textit{intra-similarity} of $S$. 

With the set of samples $V$ and pairwise similarities $w_{ij}$ for $i, j \in V$, 
the goal of HNC is to find a partition of $V$ to two non-empty sets: the cluster $S$ and its complement $\overline{S}$ 
that optimizes the tradeoff between two objectives: 
\textit{high intra-similarity} within the set $S$ and \textit{small inter-similarity} between $S$ and $\overline{S}$. 
The problem formulation of HNC, for a tradeoff parameter $\mu \geq 0$, is then given as follows:
\begin{equation} \label{eq:HNC-lambda}
    (\text{HNC+}) \quad \minimize_{\varnothing \subset S \subset V} \quad C(S,\overline{S}) - \mu \; C(S,S)
\end{equation} 

Because of the symmetry between $S$ and $\overline{S}$, the problem can be alternatively presented for the tradeoff between the intra-similarity within $\overline{S}$ and the inter-similarity between it and its complement.
\begin{equation} \label{eq:HNC-lambda-2}
(\text{HNC-}) \quad \minimize_{\varnothing \subset S \subset V} \quad C(S,\overline{S}) - \mu \; C(\overline{S},\overline{S})
\end{equation} 

One might consider a variant of HNC that incorporates both intra-similarities, $C(S,S)$ and $C(\overline{S}, \overline{S})$, and as such generalizing both HNC+ (\ref{eq:HNC-lambda}) and HNC- (\ref{eq:HNC-lambda-2}). This variant, with two tradeoff weights $\alpha \geq 0$ and $\beta \geq 0$, is given as problem (HNC$\pm$): 
\begin{equation} \label{eq:linearcomb}
  (\text{HNC$\pm$}) \quad     \minimize_{\varnothing \subset S \subset V} \quad C(S,\overline{S}) - \alpha \; C(S,S) -\beta \; C(\overline{S}, \overline{S}) 
\end{equation} 
As proved in the next theorem, HNC$\pm$ (\ref{eq:linearcomb}) 
is equivalent to either HNC+  or HNC-, depending on the relative values of $\alpha $ and $ \beta$.

\begin{theorem}[{\bf Double Intra-similarity Theorem}] \label{thm:generalized}
HNC$\pm$ (\ref{eq:linearcomb}) is equivalent to HNC+ (\ref{eq:HNC-lambda}) when $\alpha \geq \beta$ for $\mu=\frac{\alpha-\beta}{1+\beta}$, and is equivalent to HNC- (\ref{eq:HNC-lambda-2}) when $\alpha < \beta$ for $\mu=\frac{\beta-\alpha}{1+\alpha}$.
\end{theorem}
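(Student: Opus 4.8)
The plan is to exploit the fact that, for any partition $(S,\overline{S})$ of $V$, the three similarity quantities $C(S,S)$, $C(\overline{S},\overline{S})$, and $C(S,\overline{S})$ sum to a constant independent of $S$. Every edge $[i,j] \in E$ has both endpoints in $S$, both in $\overline{S}$, or one in each; summing over all edges yields the identity $C(S,S) + C(\overline{S},\overline{S}) + C(S,\overline{S}) = W$, where $W := \sum_{[i,j] \in E} w_{ij}$ is the total edge weight of the graph. This conservation relation is the one structural fact I would establish first, since it lets me eliminate whichever intra-similarity term is inconvenient and convert HNC$\pm$ into a single-intra-similarity objective.

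For the case $\alpha \geq \beta$, I would substitute $C(\overline{S},\overline{S}) = W - C(S,S) - C(S,\overline{S})$ into the HNC$\pm$ objective (\ref{eq:linearcomb}). After collecting terms, the objective becomes $(1+\beta)\,C(S,\overline{S}) - (\alpha-\beta)\,C(S,S) - \beta W$. Since $-\beta W$ is constant in $S$ and $1+\beta > 0$, minimizing this over $\varnothing \subset S \subset V$ has the same optimal solution set as minimizing $C(S,\overline{S}) - \frac{\alpha-\beta}{1+\beta}\,C(S,S)$, which is exactly HNC+ (\ref{eq:HNC-lambda}) with $\mu = \frac{\alpha-\beta}{1+\beta}$. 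The hypothesis $\alpha \geq \beta$ guarantees $\mu \geq 0$, so the resulting instance is a valid HNC+ problem.

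The case $\alpha < \beta$ is handled symmetrically by eliminating the other intra-similarity: substituting $C(S,S) = W - C(\overline{S},\overline{S}) - C(S,\overline{S})$ into the HNC$\pm$ objective yields $(1+\alpha)\,C(S,\overline{S}) - (\beta-\alpha)\,C(\overline{S},\overline{S}) - \alpha W$, whose minimization is equivalent to HNC- (\ref{eq:HNC-lambda-2}) with $\mu = \frac{\beta-\alpha}{1+\alpha} > 0$.

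Since the derivation is essentially algebraic once the conservation identity is in hand, I do not anticipate a genuine obstacle. The only points requiring care are the bookkeeping that confirms the multiplicative factors $1+\beta$ and $1+\alpha$ are strictly positive, so that dividing through preserves the argmin, and verifying that the sign condition on $\alpha-\beta$ places $\mu$ in the admissible range required by HNC+ and HNC-. I would also state explicitly that \emph{equivalent} here means the two problems share the same minimizing set $S^*$, not the same optimal objective value, because of the additive constants $-\beta W$ and $-\alpha W$ and the positive rescaling.
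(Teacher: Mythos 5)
Your proposal is correct and follows essentially the same route as the paper's own proof: both rest on the conservation identity $C(S,S)+C(\overline{S},\overline{S})+C(S,\overline{S})=C(V,V)$ and use it to eliminate one intra-similarity term, factor out $1+\beta$ (resp.\ $1+\alpha$), and read off $\mu$. Your added remarks on the strict positivity of the scaling factors and on equivalence meaning identical minimizers (not identical optimal values) are sound clarifications, but the underlying argument is the same.
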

\begin{proof}
$C(V,V)$ is a constant, which we denote by $W_V$, and is equal to $C(S,\overline{S})+C(S,S)+C(\overline{S},\overline{S})$ for any nonempty $S \subset V$. Hence, the objective function  of (\ref{eq:linearcomb}) can be written as $C(S,\overline{S}) - \alpha \; C(S,S) -\beta (W_V - C(S,\overline{S}) - C(S,S)) = (1+\beta) (C(S,\overline{S}) - \frac{\alpha-\beta}{1+\beta} C(S,S)) - \beta W_V$.
Minimizing this function is equivalent to solving (\ref{eq:HNC-lambda}) with the tradeoff $\mu=\frac{\alpha-\beta}{1+\beta} \geq 0$ when $\alpha \geq \beta$.

Alternatively, the objective function of (\ref{eq:linearcomb}) can be written as $C(S,\overline{S}) - \alpha (W_V - C(S,\overline{S}) - C(\overline{S},\overline{S})) -\beta \; C(\overline{S},\overline{S}) = (1+\alpha) (C(S,\overline{S}) - \frac{\beta-\alpha}{1+\alpha} C(S,S)) - \alpha W_V$. Hence, minimizing this objective is equivalent to solving (\ref{eq:HNC-lambda-2}) with $\mu=\frac{\beta-\alpha}{1+\alpha} \geq 0$ when $\alpha < \beta$.
\end{proof}

Therefore, instead of solving (\ref{eq:linearcomb}) where the two intra-similarities are shown explicitly, it is sufficient to consider either HNC+  or HNC- depending on whether we put more weight on the intra-similarity of $S$, or of $\overline{S}$.
We note that in prior applications of HNC to binary classification, e.g.\ 
\citep{yang2014supervised,baumann2019comparative}, the model was the one that considered the intra-similarity in $S$ only, as in HNC+.

Applying HNC in binary classification, when labeled samples from both classes are given, the goal is to partition a data that consists of the positive and negative labeled sets, $L^+$ and $L^-$, as well as the unlabeled set, $U$, into $S$ and $\overline{S}$, and predict the labels of unlabeled samples in $U$ accordingly. In previous works, e.g.\ \citep{yang2014supervised,baumann2019comparative}, the labeled sets are used as \textit{seeds} and HNC+ is solved with the restriction that $L^+ \subseteq S \subseteq V \setminus L^-$. Unlabeled samples in the optimal $S^*$ and $\overline{S}^*$ are predicted positive and negative, respectively. 

\section{2-HNC: The two-stage method for PU learning}\label{sec:HNC}
In this section, we describe the \textit{2-HNC} method where HNC is applied in two stages in PU learning where only the positive labeled set $L^+$ and the unlabeled set $U$ are given. We then show how the optimization problems in 2-HNC are solved as parametric cut problems on associated graphs
in the complexity of a single minimum cut procedure.

\subsection{2-HNC for PU Learning} \label{subsec:2HNC}

The \textit{2-HNC} method consists of two stages. In stage 1, we solve HNC- using only the labels of the given positive labeled samples. In stage 2, we utilize the likely-negative samples extracted from the unlabeled set based on the result of the first stage, in solving HNC+ using both the positive labeled set and the likely-negative set. The output solution is the one data partition, among those that were generated in both stages, that has the fraction of positive samples closest to the ratio $\pi$, given as prior.

\subsubsection{Stage 1: Solving HNC-  with Positive Labeled Samples} \label{subsec:2HNC-Stage1}

The given positive labeled set $L^{+}$ is used as the seed set for the set $S$ in HNC+ and HNC-. Since no negative labeled samples are provided, $L^{-}=\varnothing$, meaning that no seed sample is required to be in $\overline{S}$. The seed set constraint imposed on HNC+ and HNC- is then $L^+ \subseteq S$. 

Without a seed set for $\overline{S}$, HNC+ is not well defined: the optimal solution to HNC+ is always $(S^*, \overline{S}^*) = (V, \varnothing)$ for any tradeoff $\mu \geq 0$. That is, HNC+ has only the trivial solution in which all unlabeled samples are predicted to be positive.
On the other hand, HNC- , with only positive labeled samples, gives non-trivial partitions for various values of the tradeoff parameter.

The optimal data partition for HNC- is dependent on the value of the tradeoff parameter $\mu$. We solve HNC- under the constraint $L^+ \subseteq S$, for a list of $q$ increasing nonnegative values of the tradeoff parameter $\mu$ as a parametric cut problem on an associated parametric flow graph, which satisfies the nested cut property in Lemma \ref{lemma:nestedcut}.  
For $\mu=0$, the optimal partition $(S^*, \overline{S}^*)$ is $(V, \varnothing)$. As $\mu$ increases, the optimal partition changes according to the nested cut property,
 until $\mu$ reaches a sufficiently large value, at which $(S^*, \overline{S}^*)$ is $(L^{+}, V \backslash L^{+})$.
The result of the associated parametric cut problem is a sequence of data partitions: $(S^*_1, \overline{S}^*_1), (S^*_2, \overline{S}^*_2), \dots, (S^*_q, \overline{S}^*_q)$, that correspond to increasing values of $\mu$, such that $\overline{S}^*_1 \subseteq \overline{S}^*_2 \subseteq \dots \subseteq \overline{S}^*_q$. 
We discuss in Section \ref{subsec:parametric} the procedure of solving HNC- as a parametric cut problem on an associated graph. The graph construction is also explained there and shown to be a parametric flow graph, with respect to the parameter $\mu$. Stage 1 ends here with the nested data partition sequence, that is the optimal solution to HNC- for different tradeoff values, as an output. 

\subsubsection{Stage 2: Solving HNC+ with Positive Labeled Samples and Likely-Negative Unlabeled Samples} \label{subsec:2HNC-Stage2}
Solving HNC- in stage 1 does not require negative labeled samples. For a given list of $q$ values of the tradeoff parameter $\mu$, stage 1 provides a partition of data samples into the positive negative prediction sets for each value.
Note that HNC- only considers the scenario where the intra-similarity of the negative prediction set $\overline{S}$ is given higher importance than that of the positive prediction set $S$. 
In stage 2, we consider HNC+, which is
ill-defined in the absence of a negative labeled set, as discussed in Section \ref{subsec:2HNC-Stage1}. To address that, we add to the problem the seeds for $\overline{S}$ selected from a set of unlabeled samples that are likely to be negative, $L^{N}$.
The procedure to identify $L^N$, called \textit{SelectNeg}, is based on the results of stage 1 and is described next.

SelectNeg takes as input the sequence of optimal data partitions $(S^*_1, \overline{S}^*_1), (S^*_2, \overline{S}^*_2), \dots, (S^*_q, \overline{S}^*_q)$, which are the results of solving HNC- for a list of nonnegative and increasing values of $\mu$. The nested sequence $\overline{S}^*_1 \subseteq \overline{S}^*_2 \subseteq \dots \subseteq \overline{S}^*_q$ starts from $\overline{S}^*_1=\varnothing$ and expands until $\overline{S}^*_q = V \backslash L^{+}$, which is the largest possible since we require $L^+$ to be in $S_q^*$. 

The implication of the nestedness is that, for an unlabeled sample that is predicted negative for a particular $\mu$, it is also predicted negative for any larger value of $\mu$. 
In other words, the partition sequence provides a ranking of unlabeled samples in terms of their likelihood to be predicted negative according to the order in which they join the sink set.  The lower the value of the parameter, or the earlier they join, the higher is the likelihood of being of negative label.
In Figure \ref{fig:nestedsequence}, consider the sample (or the node) that is in the negative prediction set or the sink set (in yellow) in the second graph, which corresponds to the second value of $\mu$.
The node remains in the sink set for all subsequent partitions, for larger values of $\mu$.

We use the information from the partitions produced in stage 1 for ranking the unlabeled samples according to their likelihood of being negative.  In particular, we rely on the order in which a sample joins the negative set of the partitions from stage 1. For example, consider two unlabeled samples $i, j \in U$. Let the indices of the first partition such that sample $i$ is put in the negative prediction set be $q_i := \min\{\eta \; | \; i \in \overline{S}^{*}_\eta\}$, and the index for $j$, $q_j$, is defined in a similar way. 
Without loss of generality, assume that $q_i < q_j$. According to the nested cut property, we have both of $i$ and $j$ in the positive prediction set for $\mu_1, \mu_2, \dots, \mu_{q_i-1}:$ $S^*_{1}, S^*_2,\dots, S^{*}_{q_i-1}$ as well as the negative prediction set for $\mu_{q_j}, \mu_{q_j+1}, \dots, \mu_{q}:$ $\overline{S}^*_{q_j}, \dots, \overline{S}^{*}_{q}$. The difference between the two samples is in the partitions corresponding to $\mu_{q_i}, \mu_{q_i+1}, \dots, \mu_{q_j-1}$, where sample $i$ is in the negative prediction sets $\overline{S}^*_{q_i}, \dots, \overline{S}^{*}_{q_j-1}$ and sample $j$ is in the positive prediction sets $S^*_{q_i}, \dots, S^{*}_{q_j-1}$. This implies that sample $i$ is more dissimilar from the positive set (more similar to the other set, which is the negative set) than sample $j$. Hence, for two unlabeled samples $i, j \in U$ where $q_i < q_j$, sample $i$ is more likely to be negative than sample $j$.

In SelectNeg, the number of unlabeled samples to be selected as likely-negative samples, or the size of the set $L^N$, is chosen in this work to be equal to $(\frac{1-\pi}{\pi})|L^+|$, so that the proportion of likely-negative samples among the labeled data, or $L^+ \cup L^N$, is equal to the proportion of negative samples in the entire data, which is $1-\pi$. Suppose we call $q_i$ \textit{the first-sink-set index of sample $i$}. Then, to select the likely-negative samples, we select $|L^N| = (\frac{1-\pi}{\pi})|L^+|$ unlabeled samples with the smallest first-sink-set indices.

Once the likely-negative set $L^N$ is formed, we use the positive labeled set $L^{+}$ and the likely-negative set $L^N$ as the seed sets for $S$ and $\overline{S}$, respectively, and solve HNC+ with the seed set constraint $L^+ \subseteq S \subseteq V \backslash L^N$. As a result, the output from stage 2 is another sequence of data partitions, which are the optimal solutions to HNC+ for the list of values of the tradeoff parameter $\mu$.

\subsubsection{Combining Results From Both Stages}

Among all data partitions generated in both stages, we select the partition whose positive fraction, computed as $\frac{|S^*|}{|V|}$ for a partition $(S^*, \overline{S}^*)$, is closest to the prior $\pi$. Unlabeled samples in $S^*$ of the selected partition are predicted positive, and those in $\overline{S}^*$ negative.

\subsection{Solving Parametric Cut Problems in 2-HNC} \label{subsec:parametric}

We mentioned in Section \ref{subsec:2HNC} that 2-HNC involves solving HNC+ and HNC- as parametric cut problems on associated graphs. We first explain how the two problems are solved for a single tradeoff $\mu \geq 0$ as minimum cut problems, in Section \ref{subsec:hnc-mincut}. In 2-HNC, we solve them for all values of $\mu$ in a given list, prior to selecting one partition from all that are generated. We describe how this is done as parametric cut problems in Section \ref{subsec:hnc-param}. 

\subsubsection{Solving HNC+ and HNC- for a Tradeoff Parameter \texorpdfstring{$\mu$}{$\mu$} as a Minimum Cut Problem} \label{subsec:hnc-mincut}
HNC+ and HNC- are special cases of {\em monotone integer programs}, \citep{hochbaum2002IP3,Hoc21-IP2IP3}, and as such can be solved as a minimum cut problem on an associated graph, which is a mapping from the integer programming formulation of the problems \citep{hochbaum2009polynomial}.

Using the standard formulations of HNC+ and HNC-, in the associated graph, there is a node for each sample, and a node for each pair of samples. As a result, the size of this graph is quadratic in the size of the data.  However, \citet{hochbaum2009polynomial} proved that there is a compact equivalent formulation for HNC in that the associated graph has number of nodes equal to the number of samples, $|V|$, only. These alternative formulations are shown for HNC+ and HNC- in the following lemma.

\begin{lemma}
    HNC+ is equivalent to the following problem:
\begin{equation} \label{eq:HNC-lambda-final}
\minimize_{\varnothing \subset S \subset V} \; C(S,\overline{S}) - \; \lambda \; \sum_{i \in S} \; d_i
\end{equation}
and HNC- is equivalent to
\begin{equation}
\label{eq:HNC-lambda-final-2}
\minimize_{\varnothing \subset S \subset V} \; C(S,\overline{S}) - \; \lambda \; \sum_{i \in \overline{S}} \; d_i 
\end{equation}
where $\lambda = \frac{\mu}{\mu+2}$ and $d_i = \sum_{[i,j] \in E, j \in V\backslash\{i\}} w_{ij}$ for $i \in V$.
\end{lemma}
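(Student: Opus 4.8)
The plan is to reduce both compact formulations to the original HNC+ and HNC- objectives by means of a single counting identity relating the volume of a set to its intra-similarity and the cut value. First I would establish that for any $\varnothing \subset S \subset V$,
\begin{equation*}
\sum_{i \in S} d_i = 2\,C(S,S) + C(S,\overline{S}).
\end{equation*}
This follows by a double-counting argument: expanding $\sum_{i \in S} d_i = \sum_{i \in S}\sum_{[i,j]\in E} w_{ij}$, each edge $[i,j]$ with both endpoints in $S$ is counted once in $d_i$ and once in $d_j$, contributing $2 w_{ij}$, while each edge crossing from $S$ to $\overline{S}$ is counted exactly once, contributing $w_{ij}$. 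Summing over these two cases gives $2\,C(S,S)+C(S,\overline{S})$, since the paper's convention counts each internal edge once in $C(S,S)$ and each crossing edge once in $C(S,\overline{S})$. By the same argument applied to $\overline{S}$, and using $C(\overline{S},S)=C(S,\overline{S})$ on an undirected graph, I also obtain $\sum_{i \in \overline{S}} d_i = 2\,C(\overline{S},\overline{S}) + C(S,\overline{S})$.

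Next I would solve each identity for the intra-similarity term and substitute. From the first identity, $C(S,S) = \tfrac{1}{2}\bigl(\sum_{i\in S} d_i - C(S,\overline{S})\bigr)$, so the HNC+ objective becomes
\begin{equation*}
C(S,\overline{S}) - \mu\,C(S,S) = \Bigl(1+\tfrac{\mu}{2}\Bigr)C(S,\overline{S}) - \tfrac{\mu}{2}\sum_{i \in S} d_i = \tfrac{\mu+2}{2}\Bigl(C(S,\overline{S}) - \tfrac{\mu}{\mu+2}\sum_{i \in S} d_i\Bigr).
\end{equation*}
Since $\tfrac{\mu+2}{2}$ is a strictly positive constant independent of $S$, minimizing the left-hand side over $\varnothing \subset S \subset V$ is equivalent to minimizing the bracketed expression, which is exactly formulation (\ref{eq:HNC-lambda-final}) with $\lambda = \tfrac{\mu}{\mu+2}$. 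The identical substitution using the $\overline{S}$ version of the counting identity turns the HNC- objective into $\tfrac{\mu+2}{2}\bigl(C(S,\overline{S}) - \tfrac{\mu}{\mu+2}\sum_{i\in \overline{S}} d_i\bigr)$, yielding formulation (\ref{eq:HNC-lambda-final-2}) with the same $\lambda$.

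The argument is elementary once the counting identity is in hand, so there is no serious obstacle; the only point requiring care is the factor of $2$ in front of $C(S,S)$, which arises because each internal edge is seen by both of its endpoints in the degree sum but is counted only once under the convention $C(S,S)=\sum_{[i,j]\in E,\, i,j\in S,\, i<j} w_{ij}$. I would double-check this bookkeeping to confirm that the multiplicative constant relating $\mu$ and $\lambda$ is indeed $\mu+2$ rather than, say, $\mu+1$, since an off-by-a-factor error there would alter the claimed relation $\lambda = \tfrac{\mu}{\mu+2}$.
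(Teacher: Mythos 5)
Your proposal is correct and follows essentially the same route as the paper: both hinge on the identity $\sum_{i\in S} d_i = 2\,C(S,\overline{S})$-free intra-term bookkeeping, namely $C(S,S)=\tfrac{1}{2}\bigl(\sum_{i\in S} d_i - C(S,\overline{S})\bigr)$, followed by substitution into the objective and factoring out the positive constant $1+\tfrac{\mu}{2}=\tfrac{\mu+2}{2}$, with the symmetric argument for $\overline{S}$ handling HNC-. Your double-counting justification of the identity and your care about the factor of $2$ match the paper's derivation, which obtains the same identity by splitting $d_i$ into contributions from $S$ and $\overline{S}$.
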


\begin{proof}
$C(S,S) = \sum_{[i,j] \in E, i,j \in S, i < j} w_{ij} = \frac{1}{2} \sum_{i \in S} \sum_{[i,j] \in E, j \in S\backslash \{i\}} w_{ij}$ since $w_{ij}=w_{ji}$. 
$\sum_{i \in S} \sum_{[i,j] \in E, j \in S \backslash \{i\}} w_{ij} = \sum_{i \in S} (\sum_{[i,j] \in E, j \in V \backslash \{i\}} w_{ij} - \sum_{[i,j] \in E, j \in \overline{S}} w_{ij}) = \sum_{i \in S} d_i - C(S, \overline{S})$. Hence, $C(S,S) = \frac{1}{2}(\sum_{i \in S} d_i - C(S, \overline{S}))$.

We rewrite the objective of HNC+ as $C(S,\overline{S}) - \frac{\mu}{2}(\sum_{i \in S} d_i - C(S, \overline{S})) = (1+\frac{\mu}{2}) (C(S,\overline{S}) - \frac{\mu}{\mu+2} \sum_{i \in S} d_i)$. 
Hence, HNC+ can be solved by minimizing (\ref{eq:HNC-lambda-final}): $C(S,\overline{S}) - \lambda \sum_{i \in S} d_i$, with $\lambda = \frac{\mu}{\mu+2}$.
The equivalence of HNC- and (\ref{eq:HNC-lambda-final-2}) can be shown similarly by rewriting $C(\overline{S},\overline{S})$ in HNC- as $\frac{1}{2}(\sum_{i \in \overline{S}} d_i - C(S, \overline{S}))$.
\end{proof}


In a binary classification problem where both labeled sets $L^+$ and $L^-$ are given, the seed set constraint is $L^+ \subseteq S \subseteq V \backslash L^-$. The remaining samples in $V \backslash (L^+ \cup L^-)$ make up the unlabeled set $U$. Under this constraint, the solution to HNC+ for a tradeoff parameter $\mu$, which is solved as (\ref{eq:HNC-lambda-final}) with a tradeoff parameter $\lambda = \frac{\mu}{\mu+2}$, is obtained from the minimum cut solution of the associated $(s,t)$-graph, $G^{+}_{st}(\lambda)$. The use of the superscript ``$+$'' for $G^{+}_{st}(\lambda)$ is to reflect the emphasis on the intra-similarity of $S$, or the positive prediction set, in HNC+. 

The graph $G^{+}_{st}(\lambda)$ associated with (\ref{eq:HNC-lambda-final}) with the constraint $L^+ \subseteq S \subseteq V \backslash L^-$ is illustrated in Figure \ref{fig:HNC-1}. 
The graph is constructed as follows. 
The nodes in the graph consist of the set of nodes that represent samples in $V$, and two appended nodes: the source node $s$ and the sink node $t$. For each pair of samples $i,j \in V$, there are arcs $(i,j)$ and $(j,i)$, whose capacities are equal to the pairwise similarity between $i$ and $j$, $w_{ij}$. The set of source adjacent arcs is $A_s := \{(s,i) | i \in L^+ \cup U\}$ where the capacities of $(s,i)$ for $i \in L^+$ is $\infty$, and for $i \in U$, is $\lambda d_i$. The set of sink adjacent arcs is $A_t := \{(i,t) | i \in L^{-}\}$. Each $(i,t)$ has a capacity of $\infty$.


It was shown by \citet{hochbaum2009polynomial} that, if $(\{s\} \cup S^*, \{t\} \cup \overline{S}^*)$ is the minimum cut solution of $G^{+}_{st}(\lambda)$, then ($S^*, \overline{S}^*)$ is the optimal solution to HNC+. The proof provided in \citep{hochbaum2009polynomial} is omitted here. 
Accordingly, we predict that unlabeled samples in $S^*$ are positive, and those in $\overline{S}^*$ negative.

HNC- may also be used for binary classification and can be solved similarly, via (\ref{eq:HNC-lambda-final-2}) for a tradeoff parameter $\lambda=\frac{\mu}{\mu+2}$, as a minimum cut problem on the associated graph, $G^{-}_{st}(\lambda)$, illustrated in Figure \ref{fig:HNC-2}. The only difference between $G^{+}_{st}(\lambda)$ and $G^{-}_{st}(\lambda)$ is that, in the latter, each $i \in U$ is connected to $t$, rather than $s$, with capacity of $\lambda d_i$.

\begin{figure}[h!]
\centering
\begin{subfigure}{.49\textwidth}
  \centering
  \includegraphics[width=.6\linewidth]{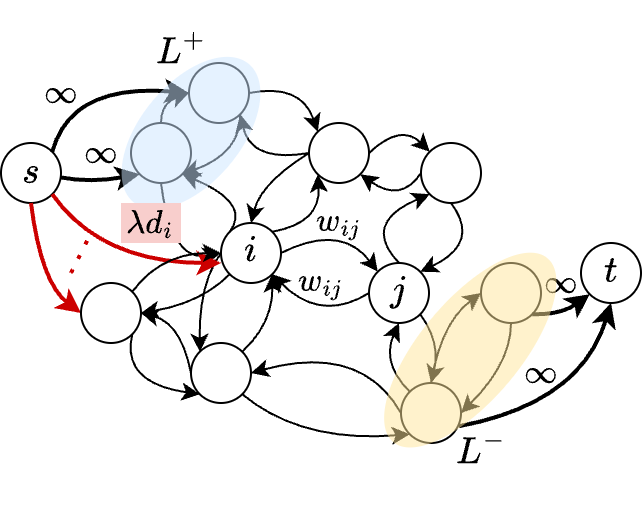}
  \caption{Graph $G_{st}^{+}(\lambda)$ for solving HNC+ with \\ the constraint $L^+ \subseteq S \subseteq V \backslash L^-$. 
  } \label{fig:HNC-1}
\end{subfigure}%
\begin{subfigure}{.49\textwidth}
  \centering
  \includegraphics[width=.6\linewidth]{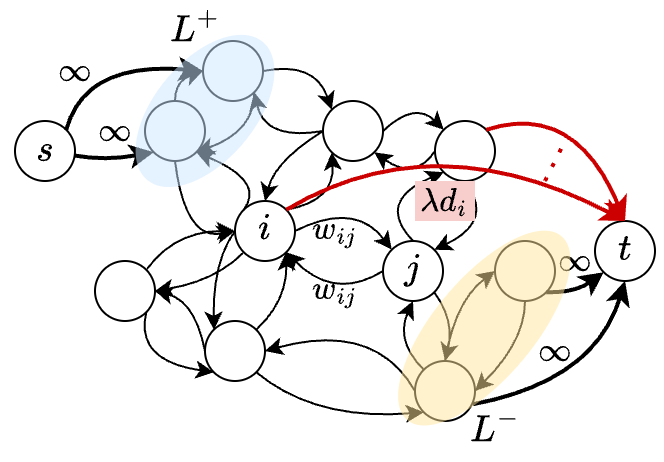}
  \caption{Graph $G_{st}^{-}(\lambda)$ for solving HNC- with \\ the constraint $L^+ \subseteq S \subseteq V \backslash L^-$.} \label{fig:HNC-2}
\end{subfigure}
\caption{Associated graphs with HNC+ and HNC- formulations,
when labeled samples from both classes are given. Nodes in the middle, outside the blue and yellow shaded areas, correspond to unlabeled samples in $U = V \backslash (L^+ \cup L^-)$.}
\label{fig:HNC-graph}
\end{figure}

In PU learning, negative labeled samples are not given. Therefore, $L^- = \varnothing$. HNC+ and HNC- in this context are then solved, for a tradeoff parameter $\lambda$,
as minimum cut problems on the graphs in Figure \ref{fig:HNC-PU-1} and \ref{fig:HNC-PU-2}, which are $G_{st}^{+}(\lambda)$ and $G_{st}^{-}(\lambda)$ where $L^- = \varnothing$. 
As explained in Section \ref{subsec:2HNC-Stage1}, HNC+, with $L^- = \varnothing$, has a trivial solution for all $\lambda\geq 0$. This is also reflected in the minimum cut of $G_{st}^{+}(\lambda)$ (Figure \ref{fig:HNC-PU-1}) with $L^- = \varnothing$, which is $(\{s\} \cup V, \{t\})$, as $t$ is disconnected from other nodes. Hence, in stage 1, we solve only HNC- using the graph $G_{st}^{-}(\lambda)$ in Figure \ref{fig:HNC-PU-2}. Once the likely-negative samples are used as seed samples in stage 2 (Section \ref{subsec:2HNC-Stage2}), HNC+ can be solved using the graph $G_{st}^{+}(\lambda)$ in Figure \ref{fig:HNC-1}.

\begin{figure}[h!]
\centering
\begin{subfigure}{.49\textwidth}
  \centering
  \includegraphics[width=.6\linewidth]{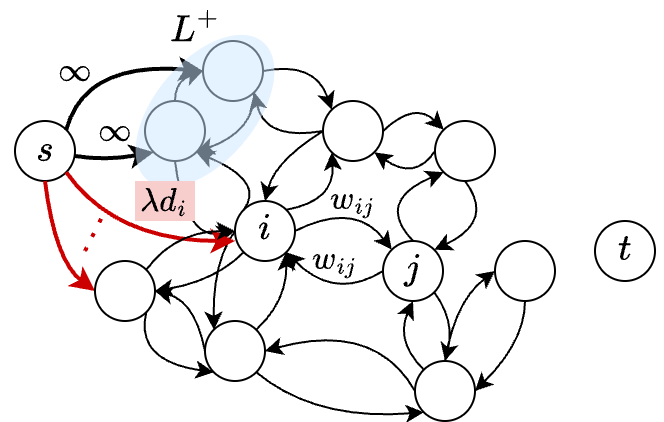}
  \caption{Graph $G_{st}^{+}(\lambda)$ for solving HNC+ with \\ the constraint $L^+ \subseteq S$, when $L^{-}=\varnothing$.} \label{fig:HNC-PU-1}
\end{subfigure}
\begin{subfigure}{.49\textwidth}
  \centering
  \includegraphics[width=.6\linewidth]{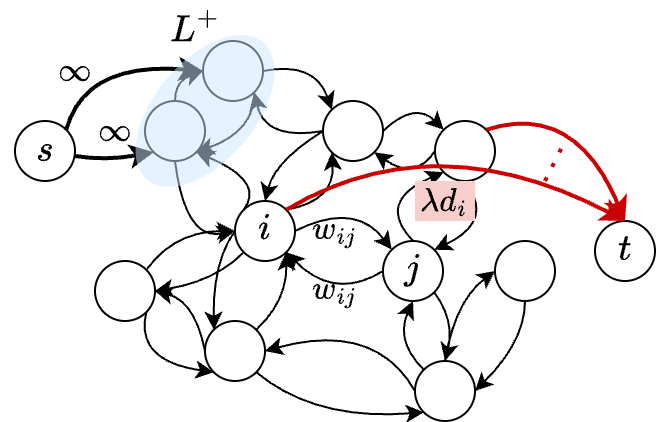}
  \caption{Graph $G_{st}^{-}(\lambda)$ for solving HNC- with \\ the constraint $L^+ \subseteq S$, when $L^{-}=\varnothing$.} \label{fig:HNC-PU-2}
\end{subfigure}
\caption{Graphs on which we solve HNC+ and HNC- as minimum cut problems, in PU learning where negative labeled samples are not provided.}
\label{fig:HNC-PU-graph}
\end{figure}

\subsubsection{Solving HNC+ and HNC- for All Tradeoff Values in a Given List with a Parametric Cut Procedure} \label{subsec:hnc-param}
Graphs $G_{st}^{+}(\lambda)$ and $G_{st}^{-}(\lambda)$ are {\em parametric flow networks}
in that the capacities of source-adjacent and sink-adjacent arcs ($(s,i)$ and $(i,t)$ for $i\in V$) are monotone non-increasing and non-decreasing with the parameter $\lambda$, or vice versa.
For instance, $G_{st}^{+}(\lambda)$ in Figure \ref{fig:HNC-1}, has source-adjacent capacities that can only increase with $\lambda$, and sink-adjacent capacities that are fixed.
Note that both graphs are also parametric flow graphs with the parameter $\mu$ since $\lambda = 1-\frac{2}{\mu+2}$ is an increasing function with respect to $\mu$. 
The minimum cuts in a parametric flow network are solved for all values of the parameter in the complexity of a single minimum cut procedure using the parametric cut (flow) algorithm, \citep{gallo1989fast,hochbaum1998pseudoflow,hochbaum2008pseudoflow}.
The first is based on the push-relabel algorithm, and the latter two on the HPF (pseudoflow) algorithm.

For our method, 2-HNC, HNC- with no negative seed for $\overline{S}$ ($L^{-}=\varnothing$) and HNC+ with the seed set $L^{-}=L^N$ for $\overline{S}$ are solved for a list of nonnegative values of the tradeoff parameter $\lambda$ in stage 1 and 2, respectively, with the simple parametric cut procedure. 

In stage 1, HNC- with $L^{-}=\varnothing$ is solved on the graph $G_{st}^{-}(\lambda)$ in Figure \ref{fig:HNC-PU-2}. 
As explained in Section \ref{subsec:2HNC-Stage1}, the result is a sequence of minimum cuts, or data partitions, for increasing values of $\mu$ (and also $\lambda$), with the nested cut property since $G_{st}^{-}(\lambda)$, in Figure \ref{fig:HNC-PU-2}, is a parametric flow graph.

This nested data partitions sequence, that is the output of stage 1, is then used in stage 2 (Section \ref{subsec:2HNC-Stage2}) to find the likely-negative set, $L^N$. HNC+ with $L^- = L^N$ is solved on the graph $G_{st}^{+}(\lambda)$ in Figure \ref{fig:HNC-1}.

At the end of stage 2, we obtain the predictions of unlabeled samples by selecting one partition, from all partitions that are generated in stage 1 and 2, whose positive fraction is closest to the prior $\pi$. 

\subsubsection{Sparsification} \label{subsec:sparsification}
A general drawback of using minimum cut in dense graphs is that the solution tends to not favor ``balanced" partitions.  In a balanced partition, there is a constant fraction $f<1$ of nodes on one side, and the number of edges between the two sets in the partition is $fn(1-f)n$, which is $O(n^2)$.  In that case, even if many edges in the partition have small capacities, their sheer number makes the capacity of such cuts much higher than cuts that contain a small number of nodes on one side.  
In the graphs we study, all pairwise similarities are evaluated.  Therefore, such graphs are complete and dense. One approach for addressing this issue is to apply {\bf graph sparsification}. 
The sparsification technique that we found to be effective for 2-HNC is the $k$-nearest neighbor (kNN) sparsification \citep{blum2001learning} where samples $i$ and $j$ are connected only if $i$ is among the $k$ nearest neighbors of $j$, or vice versa. 
This results in a graph representation $G=(V,E)$ where $E$ is the set of pairs of samples where in each pair, one sample is one of the $k$ nearest neighbors of the other sample in the pair. 

\section{Implementation of 2-HNC}\label{sec:implementation}

This section includes the specification of several implementation details. First, we give a brief description of the parametric cut solver used in this work. Second, we describe the choice of $k$ in the k-nearest neighbor graph sparsification method, as mentioned in the previous section. Finally, we quantify the pairwise similarity measure between pairs of samples. Our code implementation of 2-HNC can be found at \href{https://github.com/hochbaumGroup/Positive-Unlabeled-Learning}{https://github.com/hochbaumGroup/Positive-Unlabeled-Learning}.

\subsection{Parametric Cut Solver}

Solving HNC, via (\ref{eq:HNC-lambda-final}) and (\ref{eq:HNC-lambda-final-2}), for all nonnegative tradeoff $\lambda$ as a parametric cut problem on an associated parametric flow graph can be done using the fully parametric pseudoflow algorithm, described in \citep{hochbaum2008pseudoflow}, with the available implementation at \citep{fullpara}. 

As discussed in Section \ref{subsec:nestedcut}, we run the experiments using the simple implementation, which is available at \citep{simplepara}, since it runs faster in practice than the fully implementation. We apply the solver twice, once on the graph $G_{st}^{+}(\lambda)$ to solve HNC+ (\ref{eq:HNC-lambda-final}) and the other on $G_{st}^{-}(\lambda)$ to solve HNC- (\ref{eq:HNC-lambda-final-2}). The outputs are the sequences of cut partitions of the two parametric flow graphs, for $\lambda$ in the given list.

The values of $\lambda$ that we use for both $G_{st}^{+}(\lambda)$ and $G_{st}^{-}$ are evenly spaced increments of 0.001, starting from 0 and ending at 0.500, yielding a total of 501 values: ${ 0, 0.001, 0.002, 0.003, \dots, 0.500 }$. We select this range of $\lambda$ since it allows the sink set $\overline{S}$ to expand to $V \backslash L^{+}$, which is the largest set possible, in stage 1 and the source set to grow to $V \backslash L^N$, also largest possible, in stage 2 for all datasets in our experiments. 
The choice of the increment of $0.001$ is used since smaller increments do not produce additional partitions that are different from those generated by using the increment of $0.001$; hence, it is small enough for datasets used in the experiments. 
For other datasets that are not included here, the appropriate range and increment of values of $\lambda$ may vary depending on characteristics of the datasets such as the size and scale of feature values.

\subsection{Graph Sparsification} \label{subsec:sparsify}

As described in Section \ref{subsec:sparsification}, we apply the kNN sparsification to $G_{st}^{+}(\lambda)$ and $G_{st}^{-}(\lambda)$ on which we solve the parametric cut problem.
For each dataset of size smaller than $10000$, we use several values of $k$, $k \in \{5,10,15\}$, and find the partitions for all of them prior to selecting one for the prediction. For larger datasets we use $k = 5$.

The procedure to select a partition from those generated by all $k$'s, for small datasets, is as follows: For each $k$, we find the parametric cut on the kNN-sparsified graph and select the partition whose positive fraction is closest to $\pi$ as the \textit{candidate} partition. Among the candidate partitions from all $k$, we choose the one with the largest $k$ that has its positive fraction within $2\%$ from $\pi$. Larger $k$ is preferred since it maintains more pairwise information. If no candidate partition has positive fraction within $2\%$ from $\pi$, we choose the one with the fraction closest to $\pi$.

Here, only the smallest value of $k$, which is $5$, is used on large datasets because, as discussed in Section \ref{subsec:sparsification}, large datasets, with dense graph representation, often have highly unbalanced cuts. These large datasets benefit from a higher degree of sparsification.

In our work, we use the implementation of the $k$-nearest neighbors search from Scikit-learn \citep{scikit-learn} that relies on the k-d tree data structure and has a runtime complexity of $O(N\log{N})$ where $N$ is the number of samples in the dataset.

\subsection{Pairwise Similarity Computation} \label{subsec:similarity}
Given $H$-dimensional vector representations of samples $i$ and $j$, $\bm{x}_i, \bm{x}_j \in \mathbb{R}^H$, we compute their distance $d_{ij}$ as a weighted Euclidean distance between $\bm{x}_i$ and $\bm{x}_j$. The pairwise similarity $w_{ij}$ is then computed using the Gaussian kernel, which is commonly used in methods that rely on pairwise similarities \citep{jebara2009graph,de2013influence,baumann2019comparative}, as $w_{ij} = exp(-d_{ij}^2/2\sigma^2)$. We use $\sigma = 0.75$ for datasets with less than $10000$ samples. For larger datasets, we use $\sigma=0.25$. Again, large datasets require a higher degree of graph sparsification. Hence, a smaller $\sigma$ is applied so that similarities of distant pairs are brought closer to zero, for the same effect as the sparsification technique discussed in Section \ref{subsec:hnc-param} and \ref{subsec:sparsify}.

The weighted Euclidean distance is computed as $d_{ij} = \sqrt{\sum_{h=1}^H \rho_h (\bm{x}_{ih}-\bm{x}_{jh})^2}$ where $\rho = [\rho_1, \dots, \rho_H]$ is the weight for the feature vector of size $H$. $\rho$ is scaled so that $\sum_{h=1}^H \rho_h = H$, as we want the weights to be comparable to the standard Euclidean distance, in which $\rho = [1, \dots, 1]$. We use \textit{the feature importance} from a random forest-based PU learning method \citep{wilton2022positive} as the weight $\rho$. Features with high importance contribute to high impurity reduction at tree node splits in the random forest. 

Since we apply the kNN sparsification, we only need to compute the pairwise similarities of $O(N)$ pairs, where $N$ is the number of samples in the dataset.

\section{Experiments} \label{sec:experiment}

We compare the performance of 2-HNC with that of benchmark methods on both synthetic datasets and real datasets.

\subsection{Datasets}

For both synthetic and real datasets, we divide each dataset into the labeled set and the unlabeled set as follows. The labeled set $L^+$ is made up of $60\%$ of positive samples, selected randomly. The unlabeled set $U$ consists of the remaining $40\%$ of the positive samples and the entire set of negative samples. The models are evaluated on their labels' predictions of samples in $U$, using evaluation metrics provided in Section \ref{subsec:metrics}.

\subsubsection{Synthetic Datasets}

We generate synthetic datasets using different configurations as summarized in Table \ref{table:synthetic}. The parameters in the configuration  of dataset generation that we vary in our experiments include the number of samples and features,  the percentage of positive samples or $\pi$, the number of clusters of data points per each of the two classes, positions of the clusters (whether they are put on the vertices of a hypercube or a random polytope), and the class separation factor, which indicates distances between the vertices of a hypercube or a polytope. Reported in the table are the values of these parameters that are used. For more detailed information regarding the generation of datasets, see the documentation of the scikit-learn library \citep{scikit-learn}.

\begin{table*}[htb!]
    \caption{Configurations for the generation of synthetic datasets} \label{table:synthetic}
    \centering
    \small
    \begin{tabular}{ll}
    \toprule
    Parameters of data generation & Values \\ \midrule
    \# samples & 1000, 5000, 10000\\
    \# features & 5, 10, 20 \\
    \% positive samples ($\pi$) & 30, 40, 50, 60, 70 \\
    \# clusters per class & 2, 4\\
    class sep. & 0.5, 1.0, 2.0 \\
    hypercube & true, false \\
    \bottomrule
    \end{tabular}
\end{table*}

There are $3 \times 3 \times 5 \times 2 \times 2 \times 3 = 540$ combination of different values across all configuration parameters, resulting in $540$ configurations. For each configuration, we generate $4$ different datasets. Hence, in total, $2160$ synthetic datasets are included in the experiments. 

\subsubsection{Real Datasets}

Datasets are listed in Table \ref{table:datasets}, with the number of all samples, labeled and unlabeled samples, the number of features and the fraction of positive samples ($\pi$) of each dataset. 

\begin{table*}[htb!]
  \caption{Datasets: 60\% of positive samples are randomly selected as labeled samples. The unlabeled set consists of negative samples and the remaining 40\% of positive samples. $\pi$ is the fraction of positive samples in each dataset.}
  \label{table:datasets}
  \centering
  \small
  \begin{tabular}{llllll}
    \toprule
    Name & \# Samples &\# Labeled & \# Unlabeled & \#
    Features & $\pi$ \\ \midrule
    Vote & 435 & 160 & 275 & 16 & 0.61\\
    German & 1000 & 420 & 580 & 24 & 0.70 \\
    Obesity & 2111 & 583 & 1528 & 19 & 0.46 \\
    Mushroom & 8124 & 2524 & 5600 & 112 & 0.52\\
    Phishing & 11055 & 2938 & 8117 & 30 & 0.44 \\
    Bean & 13611 & 3709 & 9902 & 16 & 0.45 \\
    News & 18846 & 6386 & 12460 & 300 & 0.56\\
    Letter & 20000 & 5964 & 14036 & 16 & 0.50\\
    Credit & 30000 & 14018 & 15982 & 23 & 0.60\\
    MNIST & 70000 & 21349 & 48651 & 784 & 0.51\\
    \bottomrule
  \end{tabular}
\end{table*}

All datasets are from the UCI ML Repository \citep{Kelly_Longjohn_Nottingham}, except for 
\textit{News} and \textit{MNIST} that are obtained from scikit-learn \citep{scikit-learn}. 
Samples in each dataset are assigned labels (positive vs negative) as follow: \textit{Vote}: \{Democrat\} vs \{Republican\}, \textit{German}: \{Good credit risk\} vs \{Bad credit risk\}, \textit{Obesity}: \{Obesity Type I, II and III\} vs \{Insufficient, Normal, Overweight\}, \textit{Mushroom}: \{Edible\} vs \{Poisonous\}, \textit{Phishing Websites}: \{phishing\} vs \{not phishing\}, \textit{Bean}: \{Sira, Dermason\} vs \{Seker, Barbunya, Bombay, Cali, Horoz\}, \textit{News}: \{alt., comp., misc., rec.\} vs \{sci., soc., talk.\}, \textit{Letter}: \{A-M\} vs \{N-Z\}, \textit{Credit}: \{paid\} vs \{default\}, \textit{MNIST}: \{1,3,5,7,9\} vs \{0,2,4,6,8\}. 
Following the works of \citet{kiryo2017positive} and \citet{wilton2022positive}, we use a pre-trained GloVe word embedding \citep{pennington2014glove} to map each document in \textit{News} to a 300-dimension vector.

\subsection{Benchmark Methods}
2-HNC is compared against the following benchmarks: uPU \citep{du2014analysis,du2015convex}, nnPU \citep{kiryo2017positive} and \text{PU ET} \citep{wilton2022positive}, as discussed in Section \ref{sec:related-works}. 

The neural networks we use for uPU and nnPU are the same as previously used in \citep{kiryo2017positive}: a 6-layer MLP with Softsign activation function for \textit{News}, a 13-layer CNN with a ReLU final layer for \textit{MNIST}, and a 6-layer MLP with ReLU for other datasets. We use the available implementations \citep{code_upu_nnpu} of uPU and nnPU. For PU ET, we use the available implementation \citep{code_puet} and use the default hyperparameters as suggested in \citep{wilton2022positive}.

\subsection{Evaluation Metrics} \label{subsec:metrics}

We evaluate 2-HNC and the benchmark methods using two metrics on the label predictions of unlabeled samples: accuracy and balanced accuracy. Accuracy is the fraction of correct predictions among all predictions;  balanced accuracy is the average of the accuracy among the unlabeled samples from the positive class and the accuracy of those from the negative class.  Formally, denoting the counts of true positives, true negatives, false positives and false negatives by $TP, TN, FP$ and $FN$, these metrics are defined as $Accuracy = \frac{TP+TN}{TP+FP+TN+FN}$ and $Balanced \; Accuracy = \frac{1}{2}(\frac{TP}{TP+FN} + \frac{TN}{TN+FP})$.

\subsection{Results on Synthetic Datasets} 
We measure the accuracy and balanced accuracy given by 2-HNC as well as benchmark methods on each synthetic dataset. 
The overall results across all synthetic datasets are summarized via histograms. The histograms of accuracy and balanced accuracy improvement yielded by 2-HNC over benchmark methods are shown in Figures \ref{fig:syn-acc} and \ref{fig:syn-balacc}. 
Suppose the accuracy (or balanced accuracy) of 2-HNC on dataset $D$ is $a_{2HNC}(D)$ and that of a benchmark method $B$ on the same dataset is $a_B(D)$. The accuracy (or balanced accuracy) improvement of 2-HNC over $B$ on $D$ is then calculated as $(\frac{a_{2HNC}(D)}{a_{B}(D)} -1)\times 100\%$.
On the histograms, the vertical dashed line marks the average improvement, as percentage, yielded by 2-HNC over each benchmark method, whereas the vertical solid line marks the zero improvement, which is where 2-HNC and the respective benchmark for the plot have equal performance. 

\begin{figure}
  \centering
  \begin{subfigure}{.33\textwidth}
    \centering
    \includegraphics[width=\linewidth]{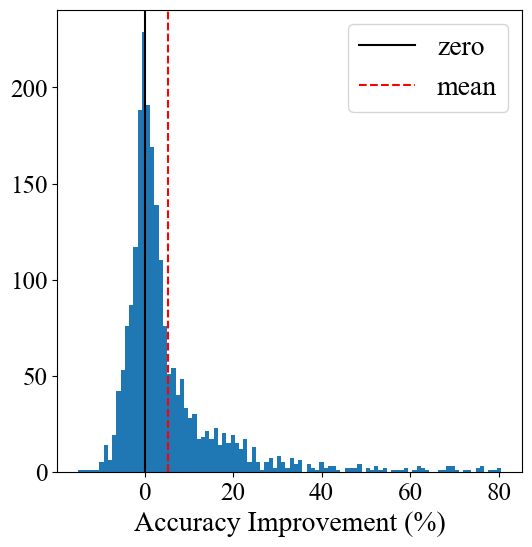}
    \caption{2-HNC vs uPU}
    \label{fig:uPU-syn-acc}
  \end{subfigure}%
  \begin{subfigure}{.33\textwidth}
    \centering
    \includegraphics[width=\linewidth]{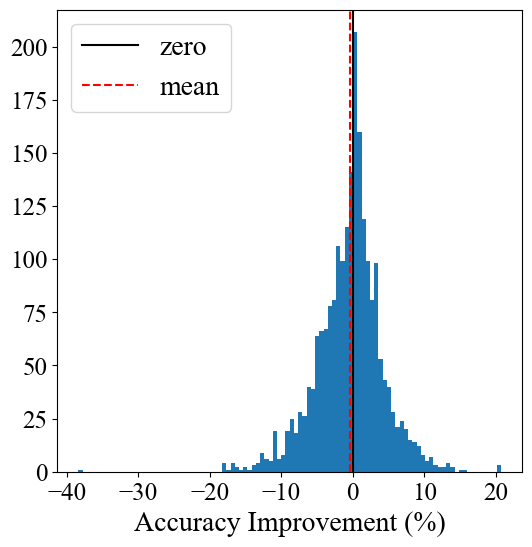}
    \caption{2-HNC vs nnPU}
    \label{fig:nnPU-syn-acc}
  \end{subfigure}
  \begin{subfigure}{.33\textwidth}
    \centering
    \includegraphics[width=\linewidth]{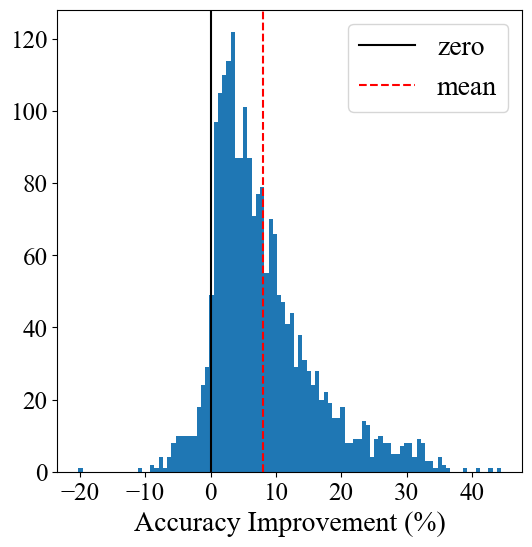}
    \caption{2-HNC vs PUET}
    \label{fig:PUET-syn-acc}
  \end{subfigure}
  \caption{Histograms of accuracy improvement yielded by 2-HNC over (a) uPU, (b) nnPU, and (c) PUET,  on 2160 synthetic datasets.}
  \label{fig:syn-acc}
\end{figure}

\begin{figure}
  \centering
  \begin{subfigure}{.33\textwidth}
    \centering
    \includegraphics[width=\linewidth]{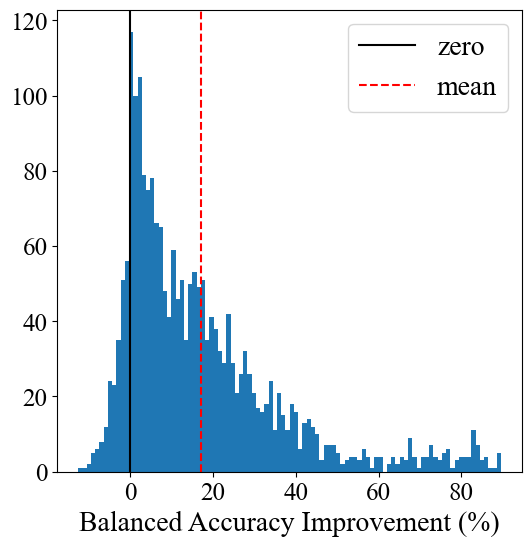}
    \caption{2-HNC vs uPU}
    \label{fig:uPU-syn-balacc}
  \end{subfigure}%
  \begin{subfigure}{.33\textwidth}
    \centering
    \includegraphics[width=\linewidth]{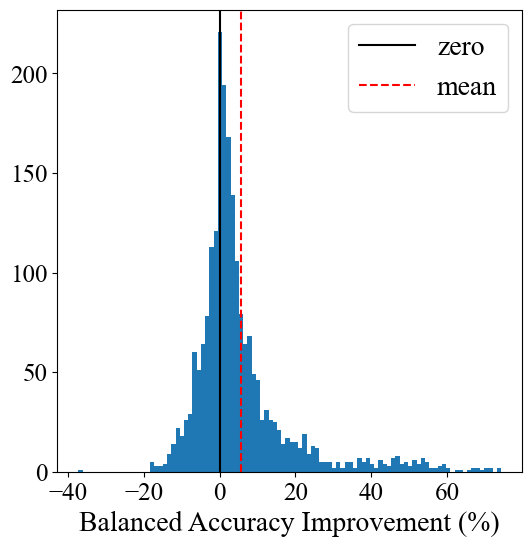}
    \caption{2-HNC vs nnPU}
    \label{fig:nnPU-syn-balacc}
  \end{subfigure}
  \begin{subfigure}{.33\textwidth}
    \centering
    \includegraphics[width=\linewidth]{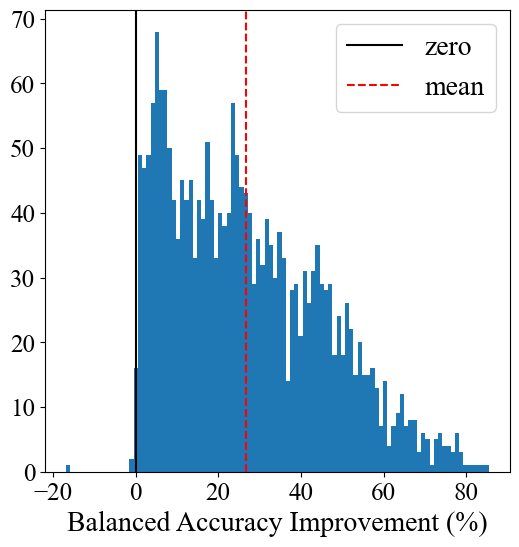}
    \caption{2-HNC vs PUET}
    \label{fig:PUET-syn-balacc}
  \end{subfigure}
  \caption{Histograms of balanced accuracy improvement yielded by 2-HNC over: (a) uPU, (b) nnPU, and (c) PUET, on 2160 synthetic datasets.}
  \label{fig:syn-balacc}
\end{figure}

The histogram plots in Figures \ref{fig:uPU-syn-acc} and \ref{fig:PUET-syn-acc}
show that, on average, 2-HNC gives positive improvement in terms of classification accuracy over uPU and PU ET, as the red dashed lines (the mean) lie to the right of the black solid line (the zero line). The average improvements, reported in Table \ref{table:syn-acc}, show the improvements of 2-HNC over uPU and PUET by $5.33\%$ and $8.02\%$, respectively. The fractions of datasets where 2-HNC improves on these two benchmarks are more than half, and in particular, as compared to PU ET, 2-HNC produces better results on almost all datasets ($92.96 \%$). 
However, 2-HNC does not deliver consistent improvement in accuracy over nnPU, with the average improvement that is slightly below zero, at $-0.44\%$, reported in Table \ref{table:syn-acc} and shown in Figure \ref{fig:PUET-syn-acc} where the red dashed line (the mean) lies almost on the solid line (the zero line). The percentage of datasets where 2-HNC improves over nnPU is just above $50 \%$.

For balanced accuracy, 2-HNC's performance improves over all benchmark methods, with the average improvement of $16.96\%, 5.57 \%$ and $26.75\%$ over uPU, nnPU and PU ET, respectively, as shown via histograms in \ref{fig:uPU-syn-balacc}, \ref{fig:nnPU-syn-balacc} and \ref{fig:PUET-syn-balacc} and reported in Table \ref{table:syn-balacc}. The percentage of datasets where 2-HNC improves over these three benchmarks are $87.87 \%, 66.94 \%$ and $99.86 \%$, respectively.

To explain the difference in performance between the two evaluation metrics, we note that there are synthetic datasets that are highly unbalanced. For these datasets, the three benchmark methods tend to assign all samples to the majority class to achieve high accuracy. 
By doing so, these methods obtain low accuracy on the minority class, and hence, low balanced accuracy when averaging the accuracies of the two classes. In contrast, 2-HNC assigns samples to both classes since the method selects the partition with the positive fraction closest to the prior $\pi$. Hence, it can obtain sufficiently high accuracy within each class, resulting in better balanced accuracy despite the lower accuracy in the majority class.

\begin{table*}[h!]
  \caption{Summary of accuracy improvement yielded by 2-HNC over benchmark methods. Reported numbers are the average improvement (\%), with standard deviation, across all synthetic datasets and the percentage of datasets where 2-HNC shows positive improvement.} \label{table:syn-acc}
  \centering
  \small
  \begin{tabular}{llll}
    \toprule
     & 2-HNC vs uPU & 2-HNC vs nnPU & 2-HNC vs PU ET \\ \midrule
    Avg. Imp. (\%) & 5.33 (12.33) & -0.44 (4.68) & 8.02 (7.90)\\
    \% Data with Imp. (\%) & 65.05 & 50.19 & 92.96 \\
    \bottomrule
  \end{tabular}
\end{table*}

\begin{table*}[h!]
  \caption{Summary of balanced accuracy improvement yielded by 2-HNC over benchmark methods. Reported numbers are the average improvement (\%), with standard deviation, across all synthetic datasets and the percentage of datasets where 2-HNC shows positive improvement.} \label{table:syn-balacc}
  \centering
  \small
  \begin{tabular}{llll}
    \toprule
     & 2-HNC vs uPU & 2-HNC vs nnPU & 2-HNC vs PU ET \\ \midrule
    Avg. Imp. (\%) & 16.96 (19.54) & 5.57 (13.45) & 26.75 (18.70)\\
    \% Data with Imp. (\%) & 87.87 & 66.94 & 99.86 \\
    \bottomrule
  \end{tabular}
\end{table*}

\subsection{Results on Real Datasets} 

The accuracy and balanced accuracy of 2-HNC and benchmark models, for real datasets, are summarized in Table \ref{table:acc} and \ref{table:balacc}. We report the average and standard error of the scores across 5 runs of each method on each dataset, where each run corresponds to one randomly generated split of dataset into labeled and unlabeled samples. For datasets where 2-HNC is dominated by at least one of the benchmark methods, we report the percentage difference between the highest accuracy (or balanced accuracy) and the accuracy given by 2-HNC, as shown in the last column of the tables. 

In terms of classification accuracy, 2-HNC yields the best result on all datasets except for \textit{German}, \textit{Phishing} and \textit{MNIST}. For \textit{German} and \textit{Phishing}, 2-HNC achieves the second highest accuracy, only $0.41\%$ and $0.83\%$ below the highest achieved accuracy, which are given by nnPU and PU ET, respectively. For \textit{MNIST}, the accuracy given by 2-HNC is $0.67\%$ below the highest accuracy given by PU ET.

Regarding balanced accuracy, 2-HNC gives the best result on all datasets, except for \textit{German} and \textit{MNIST}, where 2-HNC attains the second highest balanced accuracy, that is only $0.52\%$ and $0.20\%$ below that attained by nnPU, for \textit{German} and \textit{MNIST}, respectively.

Overall, 2-HNC is the best performer for the real datasets with respect to both evaluation metrics. Among the three benchmark models, the most competitive method is PU ET, for accuracy, and nnPU, for balanced accuracy.

\begin{table*}[h!]
  \caption{Classification accuracy (\%) average (and standard error) across 5 runs of 2-HNC and benchmark methods. For each dataset (row), the number with asterisk (*) is the highest accuracy and numbers in bold are the highest two.}
  \label{table:acc}
  \centering
  \small
  \begin{tabular}{llllll}
    \toprule
    Dataset & uPU & nnPU & PU ET & 2-HNC & Diff Max \\ \midrule
    Vote & 69.38 
    (1.50) & 93.38 (1.65) & \textbf{93.60} (1.09) & \textbf{96.15*} (0.54) & \\
    German & 61.31 (5.38) & \textbf{68.83*} (4.25) & 65.34 (0.65) & \textbf{68.55} (1.24) & -0.41\%\\
    Obesity & 78.98 (21.46) & 89.28 (7.44) & \textbf{96.36} (0.62) & \textbf{97.81*} (0.44) & \\
    Mushroom & 99.41 (0.45) & 99.61 (0.35) & \textbf{99.67} (0.18) & \textbf{99.83*} (0.24) & \\
    Phishing  & 93.78 (0.51) & 94.64 (0.23) & \textbf{95.75*} (0.27) & \textbf{94.96} (0.29) & -0.83\%\\
    Bean  & 68.91 (5.53) & 82.35 (9.89) & \textbf{97.27} (0.07) & \textbf{97.34*} (0.06) & \\
    News & 80.06 (9.29) & \textbf{89.77} (2.89) & 84.8 (0.19) & \textbf{91.35*} (0.13) & \\
    Letter &\textbf{93.88} (0.39) & 93.62 (0.85) & 91.93 (0.23) & \textbf{97.5*} (0.18) & \\
    Credit & 64.08 (4.11) & 64.08 (4.11) & \textbf{67.23} (0.24) & \textbf{69.47*} (0.5) & \\
    MNIST & \textbf{96.91} (0.2) & 96.51 (0.44) & \textbf{97.16*} (0.1) & 96.77 (0.1) & -0.67\%\\
    \bottomrule
  \end{tabular}
\end{table*}

\begin{table*}[h!]
  \caption{Balanced accuracy (\%) average (and standard error) across 5 runs of 2-HNC and benchmark methods. For each dataset (row), the number with asterisk (*) is the highest balanced accuracy and numbers in bold are the highest two.}
  \label{table:balacc}
  \centering
  \small
  \begin{tabular}{llllll}
    \toprule
    Dataset & uPU & nnPU & PU ET & 2-HNC & Diff Max \\ \midrule
    Vote & 60.65 (39.35) & \textbf{92.34} (4.68) & 92.22 (6.24) & \textbf{95.9*} (1.13) & \\
    German & 60.63 (19.63) & \textbf{68.89*} (1.75) & 64.6 (21.53) & \textbf{68.53} (0.67) & -0.52\%\\
    Obesity & 75.17 (7.76) & 89.17 (0.22) & \textbf{92.87} (7.11) &  \textbf{97.2*} (1.26) & \\
    Mushroom & 99.02 (0.98) & \textbf{99.57} (0.09) & 99.45 (0.55) &  \textbf{99.78*} (0.14) & \\
    Phishing & 87.81 (11.54) & 91.89 (5.33) & \textbf{92.84} (5.63) & \textbf{93.18*} (3.44) & \\
    Bean & 55.99 (25.82) & 74.2 (16.29) & \textbf{96.01} (2.52)& \textbf{96.44*} (1.8) & \\
    News & 71.36 (27.47) & \textbf{87.99} (5.63) & 79.02 (18.26) &  \textbf{90.38*} (3.05) & \\
    Letter & 90.19 (8.49) & \textbf{91.51} (4.87) & 85.85 (14.03) &   \textbf{96.95*} (1.28) & \\
    Credit & 59.39 (27.69) & 59.39 (27.69) & \textbf{67.3} (0.36) &  \textbf{68.54*} (5.47) & \\
    MNIST & 95.7 (2.9) & \textbf{96.27*} (0.58) & 95.78 (3.34) & \textbf{96.08} (1.65) & -0.20\%\\
    \bottomrule
  \end{tabular}
\end{table*}

We next report on the {\em statistical significance} of (i) the outperformance of 2-HNC over the second best method on datasets where 2-HNC achieves the highest performance and (ii) the underperformance of 2-HNC below the best benchmark on datasets where 2-HNC is not the best performing method.  Two statistical tests are used: the paired t-test and the Wilcoxon signed-rank test. The paired t-test is a common statistical test that makes several assumptions on the underlying numbers that are compared, such as the normality of the differences between the accuracies of the two methods \citep{demvsar2006statistical}. Wilcoxon test is an appropriate and common alternative when one compares two classifiers over multiple datasets, without relying on the assumption that the differences are normally distributed \citep{demvsar2006statistical}. 

The results of the paired t-test are reported in Table \ref{table:pvalues-outperform} and \ref{table:pvalues-underperform} for the outperformance and underperformance, respectively. The results of the Wilcoxon signed-rank test are consistent with those of the paired t-test and are described in text in the following paragraphs.

In Table \ref{table:pvalues-outperform}, we only report p-values for datasets and evaluation metrics where 2-HNC has the best performance. Hence, \textit{German} and \textit{MNIST} are not included here, as well as the p-value for the accuracy on \textit{Phishing}, where 2-HNC is dominated by PU ET. The alternative hypothesis for the test is that 2-HNC outperforms the best benchmark. 

We see that in most cases, the t-test p-values are below $0.05$, indicating that we reject the null hypothesis and that 2-HNC outperforms the best benchmark with high statistical significance ($\alpha=0.05$). The exceptions are \textit{Mushroom} and \textit{News} data where 2-HNC has the best performance on average but without high statistical significance. The Wilcoxon p-values, which are not reported in the table, are consistent with the t-test results. For the pairs of dataset and metric on which t-test p-values show high statistical significance (those with asterisk in Table \ref{table:pvalues-outperform}), their corresponding p-values of Wilcoxon test are $0.0313$, which also demonstrates high statistical significance ($\alpha = 0.05$). 

The Wilcoxon p-values for the outperformance of 2-HNC on \textit{Mushroom}, for accuracy and balanced accuracy, are $0.2188$ and $0.2326$, respectively, and for \textit{News}, they are $0.3125$ and $0.2188$, respectively, all of which are above $0.05$ and do not demonstrate high statistical significance. Overall, 2-HNC outperforms the benchmark methods with high statistical significance on all datasets, except for \textit{Mushroom} and \textit{News}.

\begin{table*} 
  \caption{T-test p-values for the outperformances of 2-HNC over the best benchmarks. (*) denotes p-values where HNC outperforms with high statistical significance ($\alpha=0.05$).}
  \label{table:pvalues-outperform}
  \small
  \centering
  \begin{tabular}{lllll}
    \toprule
    & \multicolumn{2}{c}{Accuracy} & \multicolumn{2}{c}{Balanced Accuracy} \\ \midrule
    \multirow{2}{*}{Dataset} & Best & T-test & Best & T-test \\ 
    & benchmark & p-values & benchmark & p-values\\ \midrule

    Vote & PU ET & 0.0087* & nnPU & 0.0029* \\
    Obesity & PU ET & 0.0009* & PU ET & 0.0007* \\
    Mushroom & PU ET & 0.2306 & nnPU & 0.2610 \\
    Phishing & - & - & PU ET & 0.0103*\\
    Bean & PU ET & 0.0125* & PU ET & 0.0011* \\
    News & nnPU & 0.1704 & nnPU & 0.1340 \\
    Letter & uPU & 1.9e-5* & nnPU & 4.0e-5* \\
    Credit & PU ET & 0.0003* & PU ET & 0.0028* \\
    \bottomrule
  \end{tabular}
\end{table*}
\begin{table*} 
  \caption{T-test p-values for the underperformances of 2-HNC below the best benchmarks. (*) denotes p-values where HNC underperforms with high statistical significance ($\alpha=0.05$).}
  \label{table:pvalues-underperform}
  \small
  \centering
  \begin{tabular}{lllll}
    \toprule
    & \multicolumn{2}{c}{Accuracy} & \multicolumn{2}{c}{Balanced Accuracy} \\ \midrule
    \multirow{2}{*}{Dataset} & Best & T-test & Best & T-test \\ 
    & benchmark & p-values & benchmark & p-values\\ \midrule

    German & nnPU & 0.4451 & nnPU & 0.4302 \\
    Phishing & PU ET & 0.0004* & - & -\\
    MNIST & PU ET & 3.3e-9* & nnPU & 0.0711 \\
    \bottomrule
  \end{tabular}
\end{table*}

Table \ref{table:pvalues-underperform} shows the t-test p-values for the underperformance of 2-HNC below the best benchmarks. For each dataset (row) and each metric (column), the null hypothesis is that 2-HNC performs better than or equally well as the best benchmark. 
The alternative hypothesis is that 2-HNC underperforms the best benchmark. Hence, if the p-value is smaller than $0.05$, we would reject the null hypothesis and conclude that 2-HNC underperforms the best benchmark on that dataset with respect to the corresponding metric, with statistical significance.

Since the p-values on \textit{German} are $0.4451$ and $0.4302$, for the two evaluation metrics, which are much higher than $0.05$, we do not reject the null hypothesis and cannot conclude with high statistical significance that 2-HNC underperforms nnPU on \textit{German}.


This is also the case when we consider the balanced accuracy of 2-HNC and the best benchmark on \textit{MNIST} where the p-value is $0.0711$, which is larger than $0.05$. Hence, it cannot be concluded with statistical significance that 2-HNC has a worse performance than nnPU. 
The underperformance of 2-HNC is statistically significant for its accuracy on \textit{Phishing} and \textit{MNIST} as compared to PU ET, in which the p-values are $0.0004$ and $3.3e\text{-}9$, respectively. 

P-values from the Wilcoxon signed-rank test are consistent with the t-test results. For the comparison of accuracy and balanced accuracy on \textit{German}, and balanced accuracy on \textit{MNIST}, the p-values are $0.5$, $0.5$ and $0.1563$, respectively, which are all above $0.05$ and do not show statistical significance for the superiority of the benchmark method. P-values for the accuracy on \textit{Phishing} and \textit{MNIST} are both $0.0313$, which are below $0.05$ and demonstrate statistical significance for the underperformance of 2-HNC compared to PU ET in this case.

The conclusion from Table \ref{table:pvalues-outperform} and Table \ref{table:pvalues-underperform} is that, for datasets and metrics where 2-HNC achieves better results than all the benchmark methods, it outperforms with high statistical significance, with the exception of the two datasets: \textit{Mushroom} and \textit{News}. 
On the other hand, when 2-HNC does not attain the best result among all models, the underperformance of 2-HNC is statistically significant only for the accuracy metric on \textit{Phishing} and \textit{MNIST}. In other cases, the underperformance of 2-HNC cannot be concluded with high statistical significance. 

\section{Conclusions}

We introduce here a new PU learning method, 2-HNC, which employs pairwise similarities between samples and as such is particularly suitable for the absence of training labels for the negative set, since we leverage additional information from unlabeled samples.
The technique employed makes an innovative use of network flow methods and in particular the parametric minimum cut algorithm.
It is shown here that 2-HNC is competitive as compared to leading techniques, and often improves on their performance. On synthetic datasets, 2-HNC outperforms all benchmark methods in terms of balanced accuracy. It also outperforms uPU and PU ET on accuracy, and exhibits competitive results against nnPU regarding the same metric. On real datasets, 2-HNC achieves the best performance overall for both evaluation metrics, with high statistical significance in most cases.

The use of parametric cut produces a sequence of minimum cut partitions that are used to rank the likelihood that an unlabeled sample is negative.
It is anticipated that this ranking method can be utilized successfully in contexts other than PU learning.

In addition, we introduce here the polynomial solvability of a newly defined clustering problem.  It is shown that 
the problem of trading off the objectives of maximizing intra-similarities in both a cluster and its complement and minimizing the intersimilarity between the cluster and its complement
is solvable as a minimum cut problem on an associated graph.  This is the Double Intra-similarity Theorem, proved here.

\section*{Acknowledgments}
This research was supported by the AI Institute NSF Award 2112533. This research used the Savio computational cluster resource provided by the Berkeley Research Computing program at the University of California, Berkeley (supported by the UC Berkeley Chancellor, Vice Chancellor for Research, and Chief Information Officer).

\bibliographystyle{unsrtnat}  
\bibliography{ref}

\end{document}